\DeclareSIUnit{\million}{\text{M}}
\newtheorem{proposition}{Proposition}
\newtheorem{remark}{Remark}
\definecolor{Mypurple}{HTML}{7C1D6F}
\definecolor{Mymagenta}{HTML}{DC3977}
\definecolor{Myorange}{HTML}{F0746E}
\definecolor{Myyellow}{HTML}{FCDE9C}
\definecolor{Mygreen}{HTML}{7CCBA2}
\definecolor{Myteal}{HTML}{089099}
\definecolor{Myblue}{HTML}{045275}
\newcommand{\minisec}[2]{\par\smallskip\noindent\textbf{#1.} #2}
\newcommand{\cuda}{\textsc{CUDA} }
\newcommand{\figref}[1]{fig.\,\ref{fig:#1}}
\newcommand{\Figref}[1]{Fig.\,\ref{fig:#1}}
\newcommand{\tabref}[1]{tab.\,\ref{tab:#1}}
\newcommand{\Tabref}[1]{Tab.\,\ref{tab:#1}}
\newcommand{\secref}[1]{sec.\,\ref{sec:#1}}
\newcommand{\appref}[1]{appendix\,\ref{appendix:#1}}
\newcommand{\pointcloud}{$\{ \bfx_i \}_{i=0}^N$ }
\newcommand{\bfu}{\mathbf{u}}
\newcommand{\bfh}{\mathbf{h}}
\newcommand{\bfx}{\mathbf{x}}
\newcommand{\bfy}{\mathbf{y}}
\newcommand{\bfPhi}{\mathbf{\Phi}}
\newcommand{\bfA}{\mathbf{A}}
\newcommand{\bfB}{\mathbf{B}}
\newcommand{\bfC}{\mathbf{C}}
\newcommand{\bfa}{\mathbf{a}}
\newcommand{\bfb}{\mathbf{b}}
\newcommand{\bfc}{\mathbf{c}}
\newcommand{\linear}[1]{\mathrm{Linear}\left(#1\right)}
\newcommand{\softplus}[1]{\mathbf{\Psi}\left(#1\right)}
\newcommand{\func}[1]{\left(#1\right)}
\newcommand{\R}[1]{\mathbb{R}^{#1}}
\newcommand{\C}[1]{\mathbb{C}^{#1}}
\newcommand{\Order}[1]{\mathcal{O}\left(#1\right)}
\newcommand*\samethanks[1][\value{footnote}]{\footnotemark[#1]}
\definecolor{cvprblue}{rgb}{0.21,0.49,0.74}
\title{STREAM: A Universal State-Space Model for Sparse Geometric Data}
\author{
Mark Sch\"one\textsuperscript{1\thanks{These authors contributed equally to this work.}} \and
Yash Bhisikar\textsuperscript{2\samethanks} \and
Karan Bania\textsuperscript{2\samethanks} \and
Khaleelulla Khan Nazeer\textsuperscript{1} \and
Christian Mayr\textsuperscript{1,3,4} \and
Anand Subramoney\textsuperscript{5} \and
David Kappel\textsuperscript{6}
\\\rule{0pt}{3ex}
\textsuperscript{1}TUD Dresden University of Technology\quad%
\textsuperscript{2}BITS Pilani\\ %
\textsuperscript{3}Center for Scalable Data Analytics and Artificial Intelligence (ScaDS.AI)\\
\textsuperscript{4}Centre for Tactile Internet (CeTI) with Human-in-the-Loop\\
\textsuperscript{5}Royal Holloway, University of London\quad %
\textsuperscript{6}Bielefeld University
\\\rule{0pt}{3ex}
{\tt\small \{mark.schoene, khaleelulla.khan, christian.mayr\}@tu-dresden.de}\\
{\tt\small \{f20210483, f20212582\}@goa.bits-pilani.ac.in}\\
{\tt\small anand.subramoney@rhul.ac.uk}\\
{\tt\small david.kappel@uni-bielefeld.de}
}
\begin{document}
\maketitle

\begin{abstract}
Handling sparse and unstructured geometric data, such as point clouds or event-based vision, is a pressing challenge in the field of machine vision. 
Recently, sequence models such as Transformers and state-space models entered the domain of geometric data.
These methods require specialized preprocessing to create a sequential view of a set of points.
Furthermore, prior works involving sequence models iterate geometric data with either uniform or learned step sizes,
implicitly relying on the model to infer the underlying geometric structure.
In this work, we propose to encode geometric structure explicitly into the parameterization of a state-space model.
State-space models are based on linear dynamics governed by a one-dimensional variable such as time or a spatial coordinate.
We exploit this dynamic variable to inject relative differences of coordinates into the step size of the state-space model.
The resulting geometric operation computes interactions between all pairs of $N$ points in $\Order{N}$ steps.
Our model deploys the Mamba selective state-space model with a modified \cuda kernel to efficiently map sparse geometric data to modern hardware.
The resulting sequence model, which we call STREAM, achieves competitive results on a range of benchmarks from point-cloud classification to event-based vision and audio classification.
STREAM demonstrates a powerful inductive bias for sparse geometric data by improving the PointMamba baseline  when trained from scratch on the ModelNet40 and ScanObjectNN point cloud analysis datasets.
It further achieves, for the first time, \SI{100}{\percent} test accuracy on all 11 classes of the DVS128 Gestures dataset.

\end{abstract}
\section{Introduction}
\label{sec:intro}
\begin{figure}
    \centering
    \includegraphics[width=\linewidth]{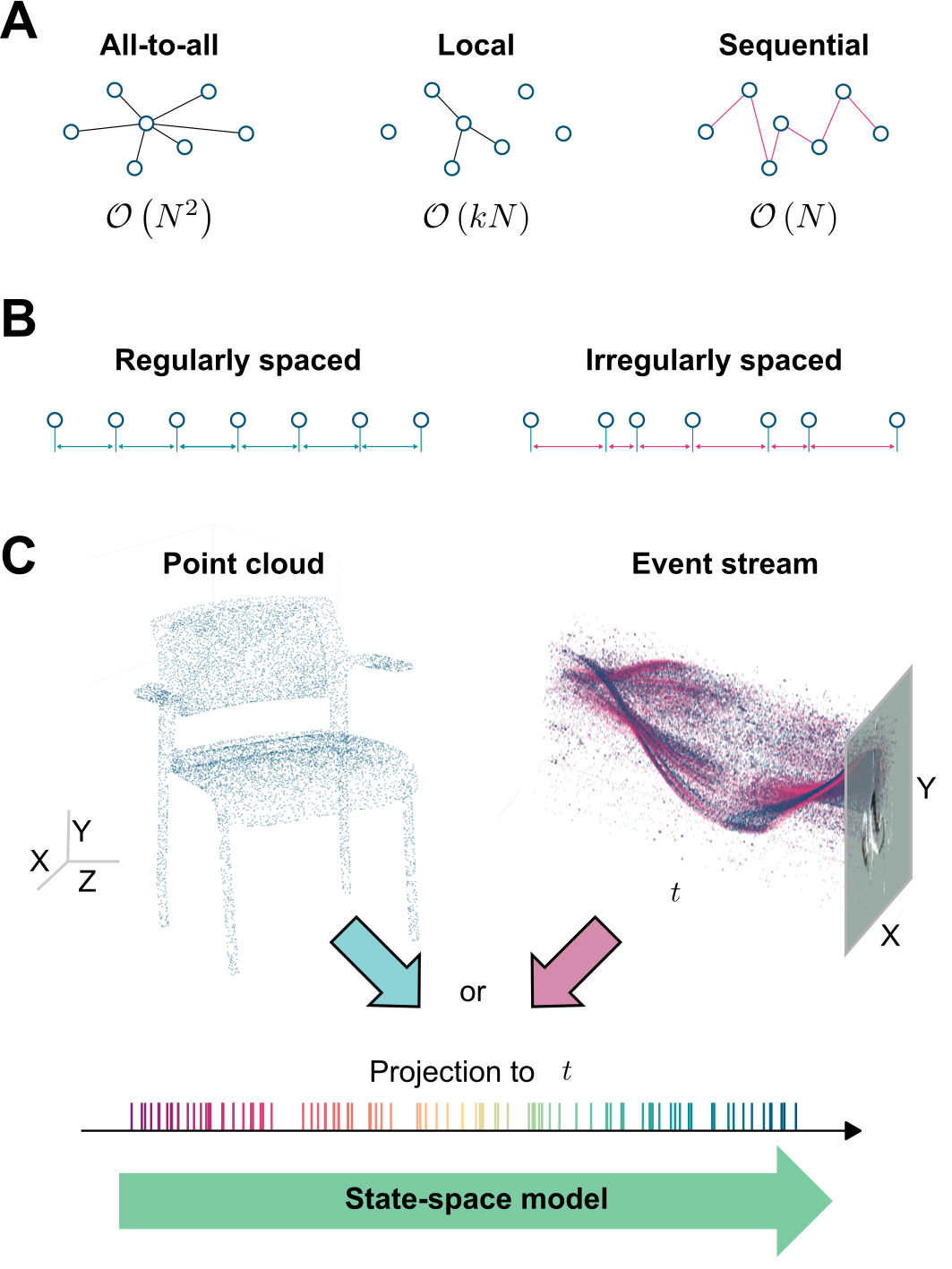}
    \caption{
        A unified view on point cloud and event stream modeling.
        \textbf{A.} Adjacency structure of point based methods 
        \textbf{B.} The coordinates of sparse geometric data are \emph{irregularly spaced}
        \textbf{C.} Point clouds and event streams are ordered by spatial and temporal axes, respectively, before the state-space model.
    }
    \label{fig:main}
\end{figure}
Computer vision computes relationships between data points in spatial coordinates $X$, $Y$ (in $\mathbb{R}^2$) or $X$, $Y$, $Z$ (in $\mathbb{R}^3$), or spatio-temporal coordinates ($\mathbb{R}^3$ or $\mathbb{R}^4$), where one of the dimensions denotes time $t$.
Convolutional neural networks based on structured, uniformly spaced and local linear operations successfully address this problem for classical camera recordings such as images or videos, which are themselves structured and discrete.
Many modalities of recent interest, however, are neither structured nor uniformly spaced. 
Sensors such as Light Detection and Ranging (LiDAR) or event-based cameras~\cite{Lichtsteiner2008, PropheseeGen4} sample signals based on sparse  processes, resulting in sparse geometric data with irregularly spaced coordinates.
Point cloud analysis was the central research field for sparse geometric data over the past decade~\cite{Guo2021pointcloudsurvey}.
While early works followed voxel-based approaches~\cite{Wu2015shapenets, Maturana2015voxnet}, point-based methods dominate the research landscape today~\cite{qi2017pointnetdeeplearningpoint, Wang2018, Wu2019, pointgpt23}.
Meanwhile, event-based cameras~\cite{Lichtsteiner2008, PropheseeGen4} raised considerable interest in the computer vision community.
Although these cameras record sparse geometric data, most works collapse the sparse stream of events into frames~\cite{ExACT2024, Zubic2024, PropheseeGen4}, potentially losing unique properties of these cameras such as low latency and high dynamic range.
The structural similarity between point clouds and event streams encourages methodological transfer, especially from the domain of point cloud analysis to event-based vision~\cite{Wang2019eventpointnet, Ren2023ttpoint}.
Our work demonstrates the reverse: The inherently temporal state-space model formulation presented in \secref{modeling-sparse-unstructured-coordinates} improves event-based vision on the DVS128-Gesture dataset~\cite{Amir2017} to \SI{100}{\percent}, and at the same time provides a valuable inductive bias for geometric data such as point clouds as demonstrated on the ModelNet40 and ScanObjectNN datasets~\cite{modelnet40, scanobjectnn}.

We show that sparse geometric data with irregularly spaced coordinates as shown in \figref{main}B can be naturally integrated in the state-space model formalism.
In contrast to prior sequence models such as PointMamba~\cite{liang2024pointmambasimplestatespace} or EventMamba~\cite{ren2024rethinkingefficienteffectivepointbased}, we \emph{explicitly} encode geometric structure into the state-space model parameterization by injecting relative differences of coordinates as step sizes.
\minisec{Contributions}
This paper makes the following main innovations over the state-of-the-art:
\begin{itemize}
    \item We propose a unified framework for modeling sparse geometric data with state-space models with irregularly spaced step sizes.
The resulting model, STREAM, handles sparse geometric data such as point clouds \emph{and} event-based vision.
    
    \item Our state-space model formulation demonstrates a valuable inductive bias for geometric structure by improving the PointMamba~\cite{liang2024pointmambasimplestatespace} baseline by up to \SI{2.84}{\percent} when trained from scratch on the ScanObjectNN dataset~\cite{scanobjectnn}.
    \item We show compelling results for event-based vision, processed as a stream of events with our purely recurrent neural network without the usage of frames or spatial convolutions.
    For the first time, we demonstrate \SI{100}{\percent} classification accuracy on all 11 classes of the DVS128-Gestures dataset \cite{Amir2017}.
    \item We provide an efficient \cuda implementation for integrating irregular step sizes based on the selective-scan implementation of \cite{gu2024mambalineartimesequencemodeling}.
\end{itemize}
\section{Related Work}
\label{sec:related-work}
\subsection{Point Cloud Analysis}
\label{sec:related-work/points}
Early point-based methods such as PointNets~\cite{qi2017pointnetdeeplearningpoint, pointnet++} directly pass the point cloud through a Multi-layer Perceptron (MLP) and introduce a permutation invariance of the set of points via pooling operations, \emph{implicitly} integrating spatial information.
In contrast, methods for \emph{explicitly} integrating spatial information parameterize linear operators such as convolutions based on relative differences between point coordinates.
The convolutional neural networks based on irregularly spaced operators presented in \cite{Wang2018, Wu2019, Wu2023, Kim2023} outperform PointNets on a range of point cloud analysis tasks.
These works parameterize the convolution kernel with MLPs evaluated at the relative differences between point coordinates.
To break the $\Order{N^2}$ complexity of computing pairwise interactions for all $N$ points, 
locality constrains are added to scale to larger point sets (see \figref{main}A).
As shown in \figref{method}A, our method also explicitly integrates spatial information in this sense, but with a kernel parameterized by a state-space model (SSM) instead of a MLP (see equation \eqref{eq:integral-operator-dirac}).
This parameterization enables the complete computation to be performed in $\Order{N}$ steps.

More recently, sequence models demonstrated favorable results over convolutional methods~\cite{pointgpt23, yu2022pointbertpretraining3dpoint, pang2022maskedautoencoderspointcloud, liang2024pointmambasimplestatespace}.
The point cloud is flattened into a sequence and then processed by transformer or state-space model based backbones.
This formulation inherits many strong scaling properties of sequence models including masked pre-training~\cite{yu2022pointbertpretraining3dpoint} or generative pre-training~\cite{pointgpt23} objectives.
Transformers, however, suffer from $\Order{N^2}$ complexity in the number of points $N$, limiting their application to larger point clouds.
State-space models, in contrast, have sequential $\Order{N}$ complexity for inference and $\Order{N\log N}$ complexity for parallel training~\cite{Gu2021, Gu2022}.
At the same time, SSMs can learn long distant relationships between inputs including convolutional operators with rational transfer functions~\cite{Gu2021, Poli2023}.
The recently introduced Mamba state-space model~\cite{gu2024mambalineartimesequencemodeling} sparked widespread discourse, and was quickly adapted in many domains, including point cloud analysis.
Works like PointMamba~\cite{liang2024pointmambasimplestatespace} or Point Cloud Mamba (PCM)~\cite{Tao2024pointcloudmamba} deploy Mamba in established point cloud frameworks such as the masked autoencoder developed in~\cite{pang2022maskedautoencoderspointcloud}.
By relying on the default Mamba architecture, spatial information is integrated \emph{implicitly} in these works similar to PointNets.
In contrast, we show in \secref{modeling-sparse-unstructured-coordinates} that spatial information can be integrated \emph{explicitly} via the SSM parameterization. 
This formulation allows us to remove specialized preprocessing methods to represent point clouds as sequences~\cite{liang2024pointmambasimplestatespace, Tao2024pointcloudmamba}, and simply order by the spatial coordinates $X$, $Y$, or $Z$.

\subsection{Event-based Vision}
\label{sec:related-work/events}
Most state-of-the-art event-based vision methods construct frames from the asynchronous event-stream by binning events in regularly spaced time intervals~\cite{subramoney2023efficient, Zubic2024, PropheseeGen4, ExACT2024, chen2024spikmambasnnmeetsmamba}.
Few methods operate on the sparse unstructured event streams directly.
AEGNN~\cite{Schaefer2022} computes spatio-temporal features based on subsampled event streams with graph neural networks.
Their method admits an event-based inference method where only nodes that correspond to incoming events are updated asynchronously.
EventMamba~\cite{ren2024rethinkingefficienteffectivepointbased} is tightly related to the point cloud analysis models discussed in \secref{related-work/points}.
They process subsampled raw event streams similar to \cite{Schaefer2022, Sekikawa2019} with a combination of a point feature extractor and Mamba.
Its high similarity with PointMamba~\cite{liang2024pointmambasimplestatespace} places EventMamba in the category of methods \emph{implicitly} integrating spatio-temporal information.
Conceptually closer to our model are filtering methods that process the stream of events recursively while \emph{explicitly integrating spatio-temporal information}.
Early methods like HOTS~\cite{Lagorce2017} create a spatial representation of the $(X, Y)$-plane of an event-based camera by integrating exact timestamps with exponentially moving averages. Similar to our method, EventNet~\cite{Sekikawa2019} recursively iterates the event stream. 
In contrast to our state-space model, their recurrent network parameterization does not inherit stability guarantees from the theory of linear systems and lacks an efficient parallelization along the time dimension during training,
limiting the scalability to longer streams.
Event-SSM~\cite{Schoene2024} proposed a discretization method for simplified state-space layers (S5)~\cite{Smith2023} to operate directly on asynchronous event streams.
Their model excels at spiking audio classification, but falls behind the state-of-the-art in event-based vision.
Concurrently to our work, S7~\cite{soydan2024s7selectivesimplifiedstate} explores improvements of the S5 architecture and Event-SSM through a more stable parameterization and input dependent state-space model parameters.
In contrast, STREAM can improve sequence-based point cloud analysis (see \secref{point-clouds}), and therefore drive the convergence between event-based vision and point cloud analysis forward.

Related from the perspective of discretizing continuous kernels is the TENNs framework~\cite{Pei2024}.
They construct convolutions over finite time horizons with orthogonal polynomials that can be discretized on irregularly spaced timestamps. 
In practice, they choose regularly spaced frames, however, and show compelling results on event-based vision benchmarks.
In comparison, STREAM parameterizes infinite time horizons via the state-space model.
\section{STREAM}
\label{sec:methods}
\begin{figure*}[ht]
    \centering
    \includegraphics[width=1.0\textwidth]{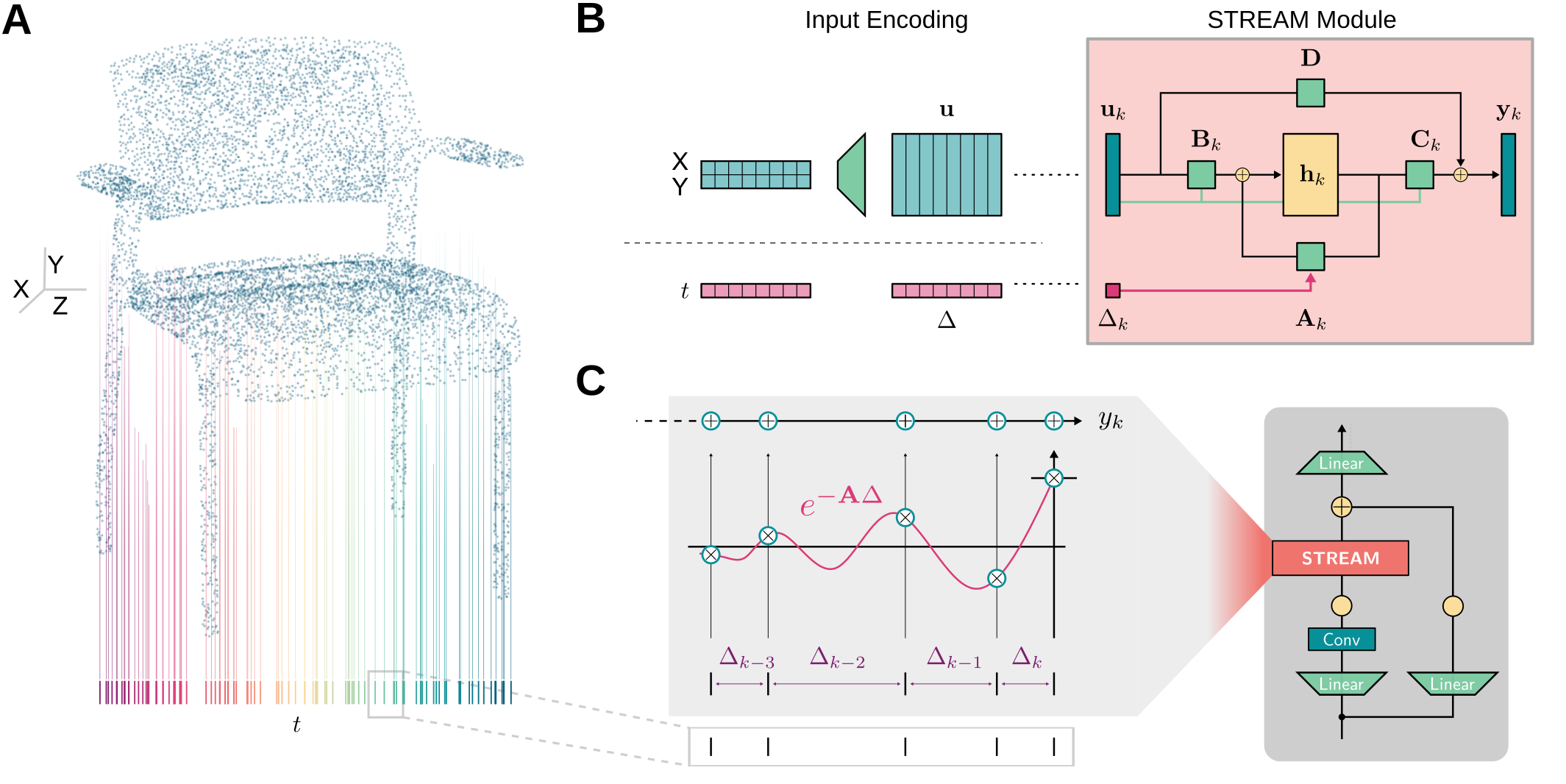}
    \caption{
        STREAM encodes geometric information into the SSM parameters.
        \textbf{A.}~Point cloud input.
        \textbf{B.}~The STREAM module converts relative differences in coordinates into the $\Delta_k$ scale of the SSM.
        \textbf{C.}~A STREAM module integrates pairwise spatial relationships based on an exponentially oscillating kernel.
    }
    \label{fig:method}
\end{figure*}
In this section, we present our method for efficiently modeling sparse sets of $N$ coordinates \pointcloud, where ${\bfx_i = \left( x_i^{(1)}, \dots, x_i^{(d)}\right) \in\R{d}}$.
This general formulation includes data in 3D space, with ${\bfx = \left(X, Y, Z\right)}$, as well as the streams recorded by event-based cameras. In the latter case one of the dimensions (e.g. $x^{(1)}$) is interpreted as event time while others  ($x^{(2)}$, $x^{(3)}$) denote spatial locations (see \figref{main}C). Furthermore, higher dimensions ($d>3$) can be used to denote data that includes other dimensions, such as color intensity, etc.  STREAM is illustrated in \figref{method}A-C.
\minisec{Notation}
We denote tensor-valued variables and functions such as multi-dimensional coordinates $\bfx\in\R{d}$ or matrices $\bfA\in\R{m\times m}$ in bold characters.
Scalar variables such as the coordinates $x^{(1)}, \dots, x^{(d)}$ of a tensor $\bfx$ or the entries $A_{ij}$ of a matrix $\bfA$ are denoted in regular characters.
\subsection{Problem Statement}
\label{sec:problem-statement}
Consider a set \pointcloud of points sparsely distributed in space.
We assign a multi-dimensional representation vector $\bfu_i$ to each coordinate $\bfx_i$ as common practice in machine learning.
Such input signals can be formulated mathematically as sums of Dirac delta pulses evaluated at the coordinates~\pointcloud
\begin{align}
    \bfu\func{\bfx} = \sum_{i=0}^N \delta\func{\bfx - \bfx_i} \bfu_i \,, \label{eq:input}
\end{align}
where $\bfu_i\in\R{n}$ is the representation at $\bfx_i$.
To reason about spatial relationships between these pulses, we define an interaction kernel $\bfPhi(\bfx, \bfx^\prime)$ that models pairwise interactions.
A complete view of all possible interactions with the point $\bfx_k$ is given by 
\begin{align}
    \bfy\func{\bfx_k} 
    &= \int \bfPhi\func{\bfx_k, \bfx^\prime} \bfu\func{\bfx^\prime} \mathrm{d}\bfx^\prime \label{eq:integral-operator} \\
    &= \sum_{i=0}^N \bfPhi\func{\bfx_k, \bfx_i} \bfu_i\,. \label{eq:integral-operator-dirac}
\end{align}
This formulation contains the familiar convolution operator as a special case when $\bfPhi\func{\bfx, \bfx^\prime} \equiv\bfPhi\func{\bfx - \bfx^\prime}$.
Equation \eqref{eq:integral-operator-dirac} highlights that the computation of $\bfy\func{\bfx_i}$ at the coordinates \pointcloud requires evaluating $\bfPhi\func{\bfx_i, \bfx_j}$ only at pairs of these coordinates.
In the general case, this operation requires $\Order{N^2}$ steps to compute.
As shown in \figref{main}A, this complexity can be reduced to $\Order{kN}$ by focusing on local neighborhoods of $k$ points,
a technique broadly used in the literature of point convolutions \cite{Wang2018, Wu2019}.
In the following sections, we present a method for recursively integrating $N$ sparsely sampled points in $\Order{N}$ time without restricting to local neighborhoods.
\subsection{Integrating Sparse Coordinates with State-space Models}
\label{sec:modeling-sparse-unstructured-coordinates}

State-space models (SSMs) have demonstrated outstanding performance on long-range dependency modeling tasks, 
while remaining efficient for training and inference~\cite{Gu2022, gu2024mambalineartimesequencemodeling}.
SSMs project the input through a linear recursive kernel to states $\bfh\func{t}$.
The independent variable $t$ is usually interpreted as, and aligned with, time in the input signal domain.
The linear nature of SSMs allows the exact integration of sparse sets of points with irregularly spaced coordinates, as we will show below.
A linear time-varying SSM acting on a scalar input function $u\func{t}$ and producing a scalar output $y\func{t}$ is defined through
\begin{align}
    \dot{\bfh}\func{t} &= \bfA\func{t} \bfh\func{t} + \bfB\func{t} u\func{t} \label{eq:cont-ssm-h}\\
    y\func{t} &= \bfC\func{t} \bfh\func{t}\,, \label{eq:cont-ssm-y}
\end{align}
with ${u(t), y(t)\in\mathbb{R}}$, states ${\bfh\func{t}\in\R{m}}$ and parameters ${\bfA\func{t}\in\R{m\times m}}, {\bfB\func{t}\in\R{m\times 1}}, {\bfC\func{t}\in\R{1\times m}}$.
For the sake of simplicity, we assume that the parameters are represented by piecewise constant functions, i.e.
$\bfA_i = \bfA\func{t_i}$, $\bfB_i = \bfB\func{t_i}$, $\bfC_i = \bfC\func{t_i}$ are constant on the intervals $(t_{i-1}, t_i]$.
We show in \appref{ssm} that solving the linear system in equations~\eqref{eq:cont-ssm-h} and~\eqref{eq:cont-ssm-y} for the input in equation~\eqref{eq:input} yields the kernel function
\begin{align}
    \bfPhi\func{t_k, t_i} = \bfC_k \left(\prod_{j=i+1}^k\exp\func{\bfA_j\Delta_j}\right) \bfB_i \,, \label{eq:kernel}
\end{align}
where $\Delta_j = t_j - t_{j-1}$.
Importantly, the output $y(t)$ of the state-space model at $t_0, \dots, t_N$ can be computed with the recursive formula
\begin{align}
    \bfh_k = \bfh\func{t_k} &= e^{\bfA_k \Delta_k} \bfh_{k-1} + \bfB_k u_k \label{eq:recurrent-ssm} \\
    y_k  = y(t_k) &= \bfC_k \bfh_k \,.
\end{align}
Equations \eqref{eq:kernel} and \eqref{eq:recurrent-ssm} highlight how relative differences $\Delta_j = t_j - t_{j-1}$ in the coordinate $t$ explicitly parameterize our pairwise interaction kernel $\bfPhi$.
An example of the interaction defined by equation \eqref{eq:kernel} can be found in \Figref{method}C.
For complex $\bfA\in\C{}$, the exponentially oscillating kernel \(e^{\bfA\Delta}\) integrates the history of signals $u_i$ occuring in irregularly spaced intervals $\Delta_i$ with $i < k$.
With this parameterization, our method differs from other Mamba based works, where $\Delta$ is a function of $u_i$ that doesn't explicitly take the dynamics of $t$ into account.
A complete derivation can be found in \appref{ssm}.

The recursive formulation in equation \eqref{eq:recurrent-ssm} is a linear recurrent neural network, whose state-to-state transition matrix \(e^{\bfA_k \Delta_k}\) is parameterized by the differences ${\Delta_k = t_k - t_{k-1}}$ in the irregularly spaced coordinates $t_i$.
As such, our method positions itself among the methods discussed in \secref{related-work/points} that \emph{explicitly integrate spatial information} through the parameterization of a linear operator with the coordinates.

The output sequence $y\func{t_0}, \dots, y\func{t_N}$ can be computed sequentially in $\Order{N}$ time.
This allows \emph{efficient inference on streams} of pulses recorded by sensors such as event-based cameras or LiDAR.
At the same time, linear recursive equations admit an \emph{efficient parallelization} via the \texttt{Scan} primitive available in \cuda \cite{blelloch1990prefix}.
On sufficiently many processors, the output sequence can be computed in $\Order{\log N}$ time, which allows massive parallelization on very long sequences of up to 1~million inputs \cite{gu2024mambalineartimesequencemodeling}.
We show in \appref{scan} how our formulation maps to the scan primitive.
\subsection{Selective state-space models and STREAM}
\label{sec:STREAM}
We develop our method based on the selective state-space model also known as Mamba~\cite{gu2024mambalineartimesequencemodeling}, which was designed for regularly spaced modalities, and language modeling in particular.
In accordance with this purpose, the integration time steps $\Delta_i$ and the matrices $\bfB_i, \bfC_i$ are functions of the input signal $u_i$, and do not carry explicit information about spatial relationships.
Formally, Mamba is parameterized by
\begin{align}
    \bfA_i &\equiv \bfA \quad \forall i &\Delta_i &= \softplus{\linear{u_i}} \label{eq:mamba1}\\
    \bfB_i &= \Delta_i\linear{u_i} & \bfC_i &= \linear{u_i} \,, \label{eq:mamba2}
\end{align}
where $\bfA$ is a learned diagonal matrix, and $\softplus{u} = \ln(1 + e^u)$ is the softplus function.
This single-input $u_i$ single-output $y_i$ SSM is then repeated $n$ times and wrapped by linear transformations to create a multi-input $\bfu_i$ multi-output $\bfy_i$ model as shown in \figref{method}B.
Note that the total state size is $nm$ in the multi-input multi-output case.

We propose to \emph{explicitly} represent the irregularly spaced pulse timings $t_i$ in the Mamba parameterization according to equation \eqref{eq:recurrent-ssm}.
Therefore, we explicitly set {\color{Mypurple}{$\Delta_i$}} to $\func{t_i - t_{i-1}}\softplus{\delta}$, where $\delta$ is a trainable time scale parameter, 
in contrast to equation \eqref{eq:mamba1}.
We further decouple the coefficient of $\bfB_i$ in equation \eqref{eq:mamba2} from the time differences {\color{Mypurple}{$\Delta_i$}} to avoid zero coefficients in case of overlapping pulses ${t_i - t_{i-1} = 0}$.
With these adjustments, our model is defined by
\begin{align}
    \bfA_i &\equiv \bfA \, \forall i & {\color{Mypurple}{\Delta_i}} &= \func{t_i - t_{i-1}} \softplus{\delta}\label{eq:stream1}\\
    \bfB_i &= {\color{Myteal}{\Gamma_i}} \linear{u_i} & \bfC_i &= \linear{u_i} \label{eq:stream2}\\
    {\color{Myteal}{\Gamma_i}} &= \softplus{\linear{u_i}}\,,
\end{align}
where $\delta\in\R{n}$ is a learnable parameter.
We call our resulting state-space model parameterization STREAM for Spatio-Temporal Recursive Encoding of Asynchronous Modalities.
\subsection{Modeling Point Clouds with STREAM}
\label{sec:point-clouds-stream}
The explicit coordinate $t$ in the STREAM module is one-dimensional and strictly ordered.
For the case of event streams, this ordering corresponds to temporal order from past to future.
Point clouds fit into this framework by ordering the set of points w.r.t. one of the spatial coordinates from left to right.
As shown in \figref{method}A,B, we select one of the coordinates to be integrated explicitly.
Inserting the $X$ coordinate for $t$ in equation \eqref{eq:stream1} without loss of generality and setting $\bfu_k = \mathbf{\Theta}\func{X_k, Y_k, Z_k}$, where ${\mathbf{\Theta}:\R{3}\longrightarrow\R{n}}$ is a point wise encoder, yields our STREAM formulation for point cloud analysis.
To achieve state-of-the-art results, we sort the point set by all three spatial coordinates respectively and feed the concatenation of the three resulting sequences to STREAM (see \secref{point-clouds-model}, \figref{point-clouds}).
\section{Experiments}
\label{sec:experiments}
We demonstrate empirically that the explicitly parameterization of the state-space model with coordinate differences $\Delta_i$ presented in sections~\ref{sec:modeling-sparse-unstructured-coordinates} and~\ref{sec:STREAM} is a strong abstraction that exploits the sparse geometry of both point clouds and event-based vision.
When trained from scratch, our method improves the PointMamba~\cite{liang2024pointmambasimplestatespace} baseline by up to \SI{2.8}{\percent} on the ScanObjectNN dataset~\cite{scanobjectnn}.
At the same time, our method sets a new state-of-the-art for event-based gesture recognition on the DVS128-Gestures dataset~\cite{Amir2017}.
\subsection{STREAM for Point Cloud Analysis}
\label{sec:point-clouds}
\subsubsection{Point Cloud Model}
\label{sec:point-clouds-model}
\begin{figure*}[ht]
    \centering
    \includegraphics[width=\textwidth]{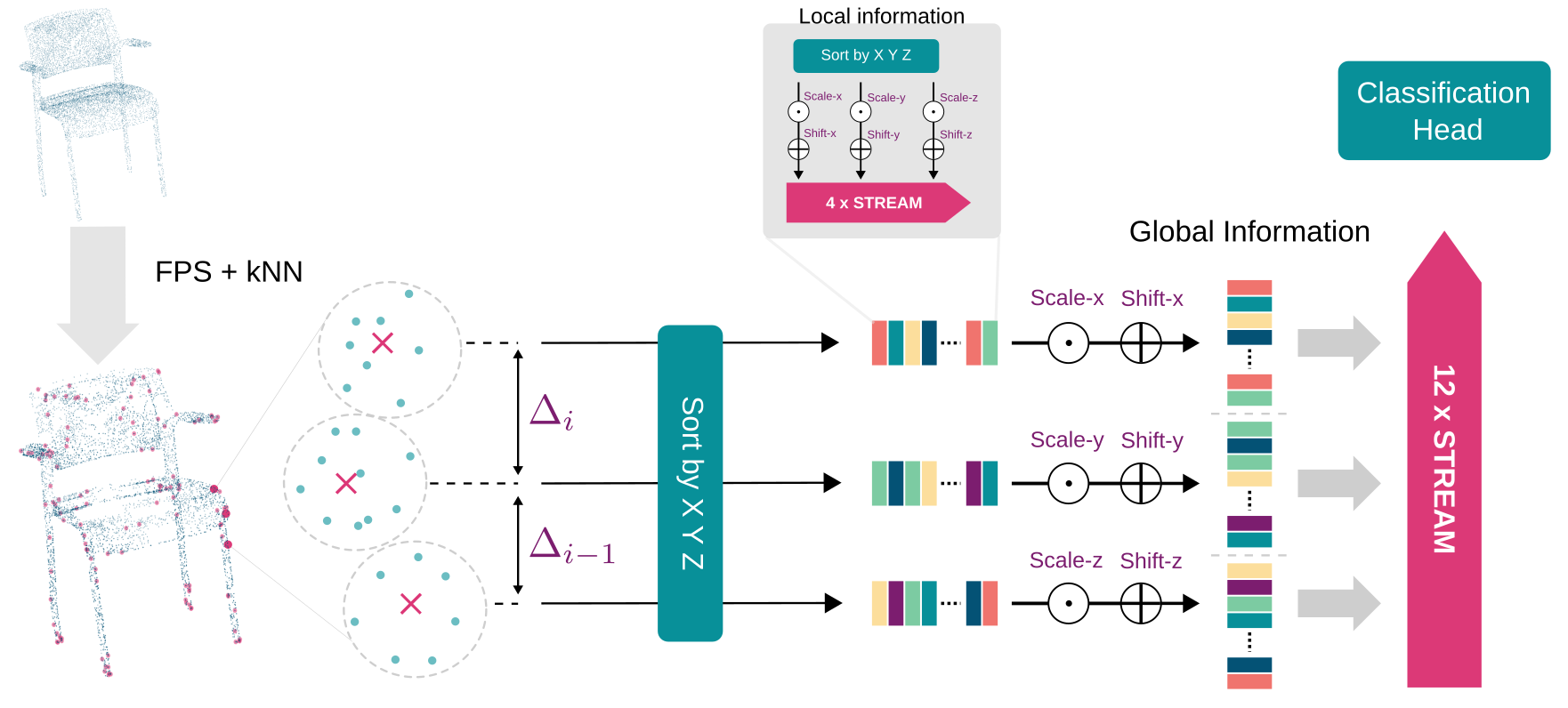}
    \caption{Point cloud processing pipeline inspired by PointMamba~\cite{liang2024pointmambasimplestatespace}.
    FPS: Farthest point sampling, kNN: k-Nearest Neighbors.}
    \label{fig:point-clouds}
\end{figure*}
Our goal is to evaluate the explicit SSM parameterization defined in \secref{STREAM}.
Therefore, we align our architecture close to PointMamba~\cite{liang2024pointmambasimplestatespace}, which uses the default Mamba module~\cite{gu2024mambalineartimesequencemodeling}.
As visualized in \figref{point-clouds}A,
we sample $N$ points and sort this point set by the $X$, $Y$, and $Z$ coordinates respectively.
The vector representations extracted from the point coordinates individually are scaled and shifted by learned parameters for the three sorted sequences separately to indicate the sorting dimension.
The resulting sequence of $3N$ points is concatenated along the sequence dimension to form the input to our model.
This preprocessing simplifies the ordering of the point set by avoiding the two space-filling Hilbert-curves applied by PointMamba at the cost of increasing the input sequence length from $2N$ to $3N$.

Similar to \cite{pang2022maskedautoencoderspointcloud, pointgpt23, liang2024pointmambasimplestatespace}, our model is composed of two stages.
The input sequence is grouped into sets of \num{32}~points that are independently processed by a STREAM module to represent local information.
A backbone of \num{12}~STREAM layers operates on the features extracted from these groups to agglomerate global information.
\subsubsection{Point Cloud Results}
\begin{table*}[htb!]
    \centering
    \begin{tabular}{lcc S[table-format=2.2] S[table-format=2.2] S[table-format=2.2] S[table-format=2.2]}
        \toprule
        \textbf{Methods} & \textbf{Backbone} & \textbf{Param. (M) $\downarrow$} & \textbf{OBJ-BG $\uparrow$} & \textbf{OBJ-ONLY $\uparrow$} & \textbf{PB-T50-RS $\uparrow$} \\ 
        \midrule
        \multicolumn{6}{c}{Supervised Learning Only} \\ 
        \midrule
        PointMLP~\citep{pointmlp} & {-} & 13.2 & 85.4 & {-} & 85.4 \\ 
        RepSurf-U~\citep{ran2022surfacerepresentationpointclouds} & {-} & 1.5 & {-} & {-} & 84.3 \\ 
        ADS~\citep{ads_hong23} & {-} & {-} & {-} & {-} & 87.5 \\ 
        Point-MAE~\citep{pang2022maskedautoencoderspointcloud} & Transformer & 22.1 & 86.75 & 86.92 & 80.78 \\
        PCM~\cite{Tao2024pointcloudmamba} & {Mamba} & 34.2 & {-} & {-} & 88.1 \\
        PointMamba~\citep{liang2024pointmambasimplestatespace} & Mamba & 12.3 & 88.30 & 87.78 & 82.48 \\
        Ours & STREAM & 12.3 & 90.02~{\small \color{Mymagenta}{(+\num{1.72})}} & 88.64~{\small \color{Mymagenta}{(+\num{0.86})}} & 85.32~{\small \color{Mymagenta}{(+\num{2.84})}} \\
        \midrule
        \multicolumn{6}{c}{Training from Pre-training (Single-Modal)} \\ 
        \midrule
        Point-BERT~\cite{yu2022pointbertpretraining3dpoint} & Transformer & 22.1 & 87.43 & 88.12 & 83.07 \\ 
        MaskPoint~\citep{liu2022maskeddiscriminationselfsupervisedlearning} & Transformer & 22.1 & 89.30 & 88.10 & 84.30 \\ 
        Point-MAE~\citep{pang2022maskedautoencoderspointcloud} & Transformer & 22.1 & 92.77 & 91.22 & 89.04 \\ 
        PointMamba~\citep{liang2024pointmambasimplestatespace} & Mamba & 12.3 & 94.32 & 92.60 & 89.31 \\ 
        \bottomrule
    \end{tabular}
    \caption{
        Object classification on ScanObjectNN~\cite{scanobjectnn}.
        We report overall accuracy (OA) in percent and
        indicate the {\color{Mymagenta}{improvement}} over the PointMamba baseline through using our STREAM module instead of the default Mamba module.
        }
    \label{tab:scanobjectnn}
\end{table*}
We evaluate our method on two standardized point cloud classification benchmarks.
The ModelNet40 dataset~\cite{modelnet40} contains \num{12311} clean point clouds sampled from 3D CAD objects.
A more realistic dataset of physically scanned indoor scenes is given by ScanObjectNN~\cite{scanobjectnn}.
ScanObjectNN contains about \num{15000} scans from \num{2902} unique objects with varying difficulty.
In both cases, we train our method with the same hyperparameters as PointMamba~\cite{liang2024pointmambasimplestatespace}.

We report the best overall classification accuracy, i.e. the average over all samples, obtained from 5 different random seeds.
\Tabref{scanobjectnn} shows that replacing the Mamba module in PointMamba with a STREAM module improves the performance of models trained from scratch across all instances on ScanObjectNN.
In particular, the hardest instance PB-T50-RS benefits from the exact integration of spatial information and improves PointMamba from \SI{82.48}{\percent} to \SI{85.32}{\percent} for the best out of 5 random seeds.
With a mean and standard deviation of \SI{84.4+-0.7}{\percent}, this poses a significant improvement.

On ModelNet40, STREAM improves the overall accuracy of PointMamba by \SI{0.3}{\percent} from \SI{92.4}{\percent} to \SI{92.7}{\percent} as reported in \tabref{modelnet40}.
Despite the smaller gain compared to ScanObjectNN, the low variance on this dataset indicates a significant improvement on ModelNet40 as well.
The mean and standard deviation of our model is \SI{92.6+-0.1}{\percent}.

Our results show that the inductive bias of explicitly encoding spatial relationships in the SSM parameterization is particularly useful when training models from scratch.
\begin{table}[htb!]
    \centering
    \begin{tabular}{l S[table-format=2.2] S[table-format=2.2]}
        \toprule
        \textbf{Methods} & \textbf{Param. (M) $\downarrow$} & \textbf{OA (\%) $\uparrow$} \\ 
        \midrule
        \multicolumn{3}{c}{Supervised Learning Only} \\ 
        \midrule
        PointNet~\citep{qi2017pointnetdeeplearningpoint} & 3.5 & 89.2 \\ 
        PointNet++~\citep{pointnet++} & 1.5 & 90.7 \\ 
        PointCNN~\citep{pointcnn18} & 0.6 & 92.2 \\ 
        DGCNN~\citep{dgcnn18} & 1.8 & 92.9 \\ 
        PointNeXt~\citep{qian2022pointnextrevisitingpointnetimproved} & 1.4 & 92.9 \\ 
        PCT~\citep{PCT21} & 2.9 & 93.2 \\ 
        OctFormer~\citep{octformer23} & 4.0 & 92.7 \\
        PCM-Tiny~\cite{Tao2024pointcloudmamba} & 6.9 & 93.1 \\
        Point-MAE~\citep{pang2022maskedautoencoderspointcloud} & 22.1 & 92.3 \\
        PointMamba~\citep{liang2024pointmambasimplestatespace} & 12.3 & 92.4 \\
        STREAM (ours) & 12.3 & 92.7~{\small \color{Mymagenta}{(+\num{0.3})}} \\
        \midrule
        \multicolumn{3}{c}{with Self-supervised pre-training} \\ 
        \midrule
        Point-BERT~\citep{yu2022pointbertpretraining3dpoint} & 22.1 & 92.7 \\ 
        MaskPoint~\citep{liu2022maskeddiscriminationselfsupervisedlearning} & 22.1 & 92.6 \\ 
        Point-M2AE~\citep{zhang2022pointm2aemultiscalemaskedautoencoders} & 12.8 & 93.4 \\ 
        Point-MAE~\citep{pang2022maskedautoencoderspointcloud} & 22.1 & 93.2 \\ 
        PointGPT-S~\citep{pointgpt23} & 29.2 & 93.3 \\ 
        ACT~\citep{dong2023autoencoderscrossmodalteacherspretrained} & 22.1 & 93.6 \\ 
        PointMamba~\citep{liang2024pointmambasimplestatespace} & 12.3 & 93.6 \\ 
        \bottomrule
    \end{tabular}
    \caption{
        Classification on ModelNet40. 
        We report overall accuracy (OA) in percent and
        indicate the {\color{Mymagenta}{improvement}} over the PointMamba baseline through using our STREAM module instead of the default Mamba module.
        }
    \label{tab:modelnet40}
\end{table}
\subsection{STREAM for Event Streams}
\label{sec:event-based-sensors}
\begin{figure}[htb!]
    \centering
    \includegraphics[width=\linewidth]{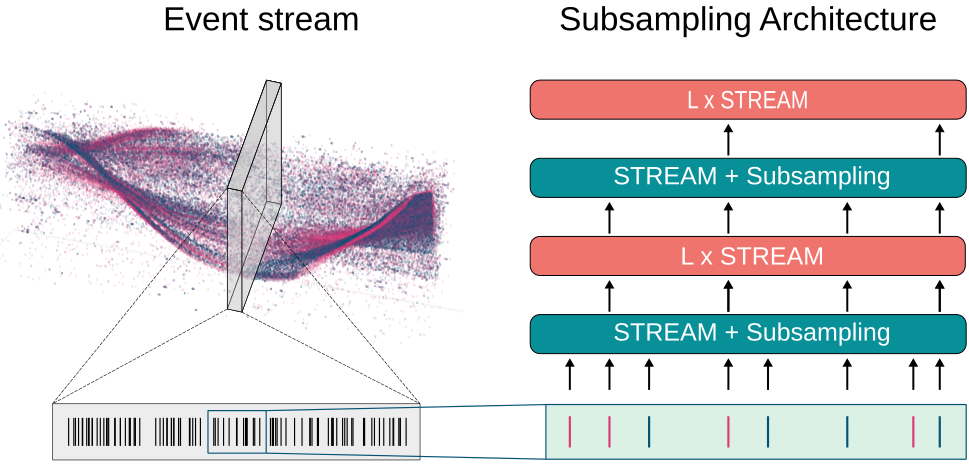}
    \caption{Hierarchical subsampling architectures with two stages to handle long event streams.}
    \label{fig:event-based}
\end{figure}
\subsubsection{Event Stream Model}
\begin{table}[htb!]
    \centering
    \begin{tabular}{l S[table-format=2.1] S[table-format=2.1]}
        \toprule
        \textbf{Methods} & \textbf{Param. (M) $\downarrow$} & \textbf{Acc (\%) $\uparrow$} \\ 
        \midrule
        \multicolumn{3}{c}{Frame-based} \\ 
        \midrule
        ACE-BET~\cite{Liu2022} & {-} & 98.9 \\
        ExACT~\cite{ExACT2024} & {-} & 98.9 \\
        SpikMamba~\cite{chen2024spikmambasnnmeetsmamba} & 0.2 & 99.0 \\
        EventMamba~\cite{ren2024rethinkingefficienteffectivepointbased} & 0.3 & 99.2 \\
        TENNs-PLEIADES~\cite{Pei2024} & 0.2 & 100.0{$^*$} \\
        \midrule
        \multicolumn{3}{c}{Event-based} \\ 
        \midrule
        Event-SSM~\cite{Schoene2024} & 5.0 & 97.7  \\
        S7 \cite{soydan2024s7selectivesimplifiedstate} & 4.1 & 99.2 \\
        STREAM (ours) & 0.2~{+~1.0$^{\dagger}$} & 100.0 \\
        \bottomrule
    \end{tabular}
    \caption{
        Comparison of STREAM to the state-of-the-art on the DVS128-Gestures dataset \cite{Amir2017}. 
        $^*$evaluated on 10 out of 11 classes, omitting the "other" class.
        $^{\dagger}$\SI{0.2}{\million} parameters for STREAM, and \SI{1.0}{\million} parameters for encoding the input stream as tokens.
        }    
    \label{tab:gestures}
\end{table}
\begin{table}[htb!]
    \centering
    \begin{tabular}{lcc}
        \toprule
        \textbf{Methods} & \textbf{Param. (M) $\downarrow$} & \textbf{Acc (\%) $\uparrow$} \\ 
        \midrule
        \multicolumn{3}{c}{Frame-based} \\ 
        \midrule
        Bittar and Garner~\cite{Bittar2022} & 0.1 & 71.7 \\
        Bittar and Garner~\cite{Bittar2022} & 3.9 & 77.4 \\
        Hammouamri et al.~\cite{hammouamri2024learning} & 0.7 & 79.8 \\
        Hammouamri et al.~\cite{hammouamri2024learning} & 2.5 & 80.7 \\
        \midrule
        \multicolumn{3}{c}{Event-based} \\ 
        \midrule
        Event-SSM~\cite{Schoene2024} & 0.1 & 85.3 \\
        Event-SSM~\cite{Schoene2024} & 0.6 & 88.4 \\
        S7~\cite{soydan2024s7selectivesimplifiedstate} & 0.6 & 88.2 \\
        STREAM (ours) & 0.2 & 86.3 \\
        \bottomrule
    \end{tabular}
    \caption{
        Comparison of STREAM to the state-of-the-art on the Spiking Speech Commands dataset \cite{Cramer2022}.
        }    \label{tab:ssc}
\end{table}
\begin{table*}[htb!]
    \centering
    \begin{tabular}{cccc S[table-number-alignment=center,table-format=2.1+-1.1]  ccc S[table-number-alignment=center,table-format=2.1+-1.1]}
    \toprule
        \multirow{3}{*}{Model} & \multicolumn{4}{c}{DVS128 Gestures} & \multicolumn{4}{c}{Spiking Speech Commands} \\
        \cmidrule{2-9}
        & \multirow{2}{*}{$t_k - t_{k-1}$} & \multicolumn{2}{c}{$\mathrm{softplus} + \mathrm{Linear}$} & {Validation} & \multirow{2}{*}{$t_k - t_{k-1}$} & \multicolumn{2}{c}{$\mathrm{softplus} + \mathrm{Linear}$} & {Validation} \\
        & & $\Delta$ & $\Gamma$ & {Accuracy} & & $\Delta$ & $\Gamma$ & {Accuracy} \\
        \midrule
                    Mamba & \ding{55} & \ding{51} & \ding{51} & 98.6+-0.7 & \ding{55} & \ding{51} & \ding{51} & 86.8+-0.1\\
        \midrule
                    & \ding{51} & \ding{55} & \ding{55} & 99.2 +- 0.3 & \ding{51} & \ding{55} & \ding{55} & 87.9 +- 0.1\\
        STREAM      & \ding{51} & \ding{55} & \ding{51} & 98.8 +- 0.2 & \ding{51} & \ding{55} & \ding{51} & 87.9 +- 0.3\\
        (variants)  & \ding{51} & \ding{51} & \ding{55} & 98.8 +- 0.6 & \ding{51} & \ding{51} & \ding{55} & 87.8 +- 0.2\\
                    & \ding{51} & \ding{51} & \ding{51} & 98.6 +- 1.2 & \ding{51} & \ding{51} & \ding{51} & 87.9 +- 0.3\\                  
        \bottomrule
    \end{tabular}
    \caption{
        STREAM versus Mamba~\cite{gu2024mambalineartimesequencemodeling} on the DVS128 Gestures~\cite{Amir2017} and Spiking Speech Commands~\cite{Cramer2022} datasets.
        STREAM directly feeds the time coordinate of irregularly spaced events as a parameter to the state-space model as described in \secref{STREAM}, which is indicated by column $t_k - t_{k-1}$.
        Different variants of STREAM might apply linear transformations and Softplus activations similar to Mamba to adjust the time scales of \(e^{\mathbf{A}\Delta_k}\) (columns $\Delta$) or the factor of $\Gamma_k$ in equation~\eqref{eq:stream2} (columns $\Gamma$). 
    }
    \label{tab:ablation}
\end{table*}
In contrast to most prior works, our method directly operates on the stream of events as visualized in \figref{event-based}.
The stream is encoded as a sequence of tokens $\func{t_i, \bfu_i}$, where each pixel of the event-based camera is uniquely encoded by a separate token, similar to how text is represented as tokens in modern language modeling~\cite{radford2019language}.
This sequence is directly processed by a STREAM module.
To accommodate large streams of up to \num{131072} events per sample in GPU memory, 
the output of the first STREAM module is subsampled by a factor of \num{8} or \num{16} depending on the task.
This way, every event is processed by the neural network at least once directly, in contrast to previous methods that subsampled the raw stream before presenting it to a neural network~\cite{Sekikawa2019, Schaefer2022}.
To create a hierarchical architecture, the sequence is subsampled a second time after half the number of layers.
The state dimension is increased by a factor of \num{2} upon the second subsampling.
Our hierarchical subsampling architecture is visualized in \figref{event-based}.
We apply average pooling along the sequence dimension before computing the class labels.
To improve generalization, we implement a set of geometric data augmentations and a variant of CutMix~\cite{Yun2019cutmix} that directly mixes event streams~\cite{Schoene2024}.

\subsubsection{Event Stream Results}
We evaluate STREAM on two popular event-based datasets.
The DVS128 Gestures dataset~\cite{Amir2017} consists of \num{1342} recordings of 10 distinct classes of hand gestures and an additional class of arbitrary other gestures.
While this dataset features only a relatively small number of samples for 11 hand gesture categories recorded from 29 subjects, the total number of events in this dataset adds up to about \SI{390}{\million} events.
Our model has an initial model dimension of $n=32$ and a state-space dimension of $m=32$, which is expanded by a factor of \num{2} upon subsampling.
We deploy a total of \num{6} STREAM blocks, which by the notation of \figref{event-based} corresponds to $L=2$.
We train with a batch size of \num{32} on sequences of \num{65536} events such that the total input sequences created by CutMix are up to \num{131072} events long, and subsample by a factor of \num{16}.
Evaluation is conducted on the full sequences of up to \SI{1.5}{\million} events.
Remarkably, our fully event-based model sets a new state-of-the-art on the DVS128-Gestures dataset as reported in \tabref{gestures}, improving over both frame-based and event-based references.
While \cite{Pei2024} reported \SI{100}{\percent} accuracy on 10 out of the 11 classes, omitting the `others' class, we for the first time present a model that can reach a maximum accuracy of \SI{100}{\percent} on all 11 classes of the dataset.

In addition to event-based vision, we evaluate STREAM on an event-based audio classification task. 
The Spiking Speech Commands dataset~\cite{Cramer2022} contains more than \num{100000} samples converted from the original Speech Commands dataset~\cite{warden2018speechcommandsdatasetlimitedvocabulary} with a median number of \num{8100} events per sample.
The model dimension is again $n=32$ with a smaller state-space dimension of $m=4$ compared to the vision model.
We deploy \num{8} STREAM blocks ($L=3$ in \figref{event-based}), and set the subsampling factor to \num{8}.
With this configuration, we report a classification accuracy of \SI{86.3}{\percent} in \tabref{ssc}, which settles between the small and large models reported in~\cite{Schoene2024}.
\subsection{Ablation Study}
\label{sec:ablation}
We compare our STREAM module against the Mamba module~\cite{gu2024mambalineartimesequencemodeling} in \tabref{ablation}.
The central difference is the explicit integration of spatio-temporal information in the recurrent operator \(e^{\mathbf{A}\Delta_k}\) with $\Delta_k = t_k - t_{k-1}$.
We additionally experiment with all combinations of applying linear transformations activated by Softplus functions to renormalize the $\Delta_k$ and $\Gamma_k$ parameters in equations \eqref{eq:stream1} and \eqref{eq:stream2}, respectively.
While the fairly small DVS128 Gestures dataset possesses high variance, STREAM consistently improves over Mamba.
This effect is stronger expressed on the larger Spiking Speech Commands dataset, which is less affected by training noise due to the larger number of samples.
Here, STREAM creates a significant margin to the Mamba baseline.

\section{Discussion}
\label{sec:discussion}
We have introduced STREAM, a sequence model for point cloud and event stream data, which achieves competitive results on a range of benchmarks from point-cloud classification to event-based vision. 
Prior efforts of unifying these modalities transferred point cloud models to event streams, or ignore the spatio-temporal structure of either modality by applying the default Mamba model to a sequential view of the data.
In contrast, our model exploits the dynamics of state-space models, a temporal process at first sight, to encode spatial geometric information into the models parameterization.
This inductive bias proved valuable as STREAM improves our reference model, PointMamba, when trained from scratch on point cloud datasets such as ModelNet40 and ScanObjectNN.
By design, STREAM applies to spatio-temporal modalities such as event-based vision.
We operate on the stream of events without collapsing events into frames and without using 2D convolutions.
This processing paradigm achieves \SI{100}{\percent} classification accuracy on all 11 classes of the event-based DVS128 Gestures dataset, a result that has so far only been achieved on the 10 predetermined classes by~\cite{Pei2024}.
A practically relevant property of STREAM is that it allows asynchronous inference on streams of points or events recorded from LiDAR sensors or event-based cameras.
We, therefore, expect STREAM to be well suited for sensor fusion of these modalities.
Previous work has shown that SSMs can benefit significantly from self-supervised pre-training on larger datasets~\cite{liang2024pointmambasimplestatespace}, which we will explore for our method in future research.

\section*{Acknowledgments}
MS is supported with funds from Bosch-Forschungsstiftung im Stifterverband.
YB and KB were funded by the German Academic Exchange Service (DAAD) under the funding programme WISE (57698568).
DK is funded by the German Federal Ministry of Education and Research (BMBF) within the project EVENTS (16ME0733).
KKN is funded by the German Federal Ministry of Education and Research (BMBF) within the KI-ASIC project (16ES0996).
CM receives funding from the German Research Foundation (DFG, Deutsche Forschungsgemeinschaft) as part of Germany’s Excellence Strategy – EXC 2050/1 – Project ID 390696704 – Cluster of Excellence “Centre for Tactile Internet with Human-in-the-Loop” (CeTI) of Technische Universität Dresden.
This work was partially funded by the German Federal Ministry of Education and Research (BMBF) and the free state of Saxony within the ScaDS.AI center of excellence for AI research.
The authors gratefully acknowledge the computing time made available to them on the high-performance computer at the NHR Center of TU Dresden. This center is jointly supported by the Federal Ministry of Education and Research and the state governments participating in the NHR (www.nhr-verein.de/unsere-partner).
The authors gratefully acknowledge the Gauss Centre for Supercomputing e.V. (www.gauss-centre.eu) for funding this project by providing computing time on the GCS Supercomputer JUWELS~\cite{JUWELS} at Jülich Supercomputing Centre (JSC).
{
    \small
    \bibliographystyle{ieeenat_fullname}
    \bibliography{main}
}

\clearpage
\setcounter{page}{1}
\maketitlesupplementary

\section{Derivations}
\label{appendix:ssm}
This section provides complete derivations of equations \eqref{eq:kernel} and \eqref{eq:recurrent-ssm}.
We restrict to the case of single-input single-output (SISO) state-space models with multi-dimensional state.
Multi-input multi-output (MIMO) formulations like S4~\cite{Gu2022} or Mamba~\cite{gu2024mambalineartimesequencemodeling} can be obtained by creating $n$ parallel instances of the SISO model and mixing the input and output components with linear layers.

Our motivation was to reason about spatial relationships between input pulses.
Therefore, we defined an interaction kernel $\bfPhi(\bfx, \bfx^\prime)$ that models pairwise interactions.
A complete view of all possible interactions with the point $\bfx_k$ is given by 
\begin{align}
    \bfy\func{\bfx_k} 
    &= \int \bfPhi\func{\bfx_k, \bfx^\prime} \bfu\func{\bfx^\prime} \mathrm{d}\bfx^\prime \label{eq:appendix-integral-operator} \\
    &= \sum_{i=0}^N \bfPhi\func{\bfx_k, \bfx_i} \bfu_i\,. \label{eq:appendix-integral-operator-dirac}
\end{align}
We will now derive a parameterization of the interaction kernel $\bfPhi$ with a state-space model.

A linear time-varying state-space model acting on a scalar input function $u\func{t}$ and producing a scalar output $y\func{t}$ is defined through
\begin{align}
    \dot{\bfh}\func{t} &= \bfA\func{t} \bfh\func{t} + \bfB\func{t} u\func{t} \label{eq:appendix-ssm-h}\\
    y\func{t} &= \bfC\func{t} \bfh\func{t}\,, \label{eq:appendix-ssm-y}
\end{align}
with ${u(t), y(t)\in\mathbb{R}}$, states ${\bfh\func{t}\in\R{m}}$ and parameters ${\bfA\func{t}\in\R{m\times m}}, {\bfB\func{t}\in\R{m\times 1}}, {\bfC\func{t}\in\R{1\times m}}$.

Note that we stick to our notation introduced in \secref{methods}, denoting vectors and matrices with bold face and scalar values with regular face.

\subsection{Solution of the Linear State-space Model}
Linear ordinary differential equations have a well known analytical solution. 
We refer the reader to standard calculus textbooks.
As such, the linear dynamics of equation \eqref{eq:appendix-ssm-h} for initial value $h_0 = h\func{t_0}$ admit the analytical solution
\begin{align}
    \bfh\func{t} 
    = h_0 + \int_{t_0}^t \exp\func{\int_{t^\prime}^t\bfA\func{t^{\prime\prime}}\mathrm{d}t^{\prime\prime}} \bfB\func{t^\prime} u\func{t^\prime} \mathrm{d}t^\prime \,, \label{eq:appendix-analytic-solution}
\end{align}
which can be checked by taking the derivative of $\bfh$ and comparing it to equation \eqref{eq:appendix-ssm-h}.
Comparing equations \eqref{eq:appendix-analytic-solution}, \eqref{eq:appendix-ssm-y} and \eqref{eq:appendix-integral-operator}, we read off the kernel
\begin{align}
    \bfPhi\func{t, t^\prime} = \bfC\func{t} \exp\func{\int_{t^\prime}^t\bfA\func{t^{\prime\prime}}\mathrm{d}t^{\prime\prime}} \bfB\func{t^\prime}
\end{align}
Note that equation \eqref{eq:appendix-analytic-solution} can be evaluated on all coordinates $t > t_0$.
The two canonical options for discretizing the variables $h$ and $y$ are equidistant steps resulting in a regular grid, or using the input coordinates $t_0, \dots, t_N$.
While most state-space model works discretize on equidistant steps, \cite{Smith2023} shows on a toy task that the SSM formulation is capable of solving tasks with irregularly spaced steps as well.

Our work differs from other recent Mamba based models such as PointMamba~\cite{liang2024pointmambasimplestatespace}, Point Cloud Mamba~\cite{Tao2024pointcloudmamba}, EventMamba~\cite{ren2024rethinkingefficienteffectivepointbased}, or SpikMamba~\cite{chen2024spikmambasnnmeetsmamba} by integrating the true timings of the inputs in the following sense.

We will use $h_0 = 0$ for notational simplicity in the following.
Discretizing equation \eqref{eq:appendix-analytic-solution} on the Dirac delta coded input \eqref{eq:input} then yields
\begin{align}
    \bfh\func{t_k}
    &= \int_{t_0}^{t_k} \exp\func{\int_{t}^{t_k}\bfA\func{t^\prime}\mathrm{d}t^\prime} \bfB\func{t} \sum_{i=0}^N\delta\func{t - t_i}u_i\mathrm{d}t \nonumber\\
    &= \sum_{i=0}^{k} \exp\func{\int_{t_i}^{t_k}\bfA\func{t}\mathrm{d}t} \bfB\func{t_i} u_i \,.
\end{align}
We decompose the integral from $t_i$ to $t_k$ into the integrals from $t_{j-1}$ to $t_j$ and sum over them
\begin{align}
    \bfh\func{t_k}
    &= \sum_{i=0}^{k} \exp\func{\sum_{j=i+1}^{k}\int_{t_{j-1}}^{t_j}\bfA\func{t}\mathrm{d}t} \bfB\func{t_i} u_i \,.
\end{align}
We will further assume that $\bfA\func{t}, \bfB\func{t}, \bfC\func{t}$ are constant on the intervals $(t_{j-1}, t_j]$ for $j=i+1, \dots, k$.
Denoting ${\bfA\func{t_j} = \bfA_j}$, ${\bfB\func{t_j} = \bfB_j}$, ${\bfC\func{t_j} = \bfC_j}$, ${\bfh\func{t_k} = \bfh_k}$, and ${\Delta_j = t_j - t_{j-1}}$ we get
\begin{align}
    \bfh\func{t_k}
    &= \sum_{i=0}^{k} \exp\func{\sum_{j=i+1}^{k}\bfA_j\Delta_j} \bfB_i u_i  \\
    &= \sum_{i=0}^{k} \left(\prod_{j=i+1}^k\exp\func{\bfA_j\Delta_j}\right) \bfB_i u_i \,. \label{eq:appendix-ssm-discrete}
\end{align}
Comparing to equation \eqref{eq:appendix-integral-operator-dirac}, we read off the discrete kernel function
\begin{align}
    \bfPhi\func{t_k, t_i} = \bfC_k \left(\prod_{j=i+1}^k\exp\func{\bfA_j\Delta_j}\right) \bfB_i \,. \label{eq:appendix-kernel-discrete}
\end{align}
\begin{proposition}
    The kernels parameterized by equation \eqref{eq:appendix-kernel-discrete} contain convolution operations with rational kernels as a special case.
\end{proposition}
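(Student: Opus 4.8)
The plan is to exhibit the classical discrete-time linear time-invariant (LTI) system as the relevant special case, and then invoke standard realization theory to identify LTI convolution kernels with rational transfer functions. First I would specialize the parameters in \eqref{eq:appendix-kernel-discrete} to be constant in the index, $\bfA_j \equiv \bfA$, $\bfB_i \equiv \bfB$, $\bfC_k \equiv \bfC$, on a regular grid $\Delta_j \equiv \Delta$. Writing $\bar{\bfA} = \exp\func{\bfA\Delta} \in \C{m\times m}$, the kernel collapses to $\bfPhi\func{t_k, t_i} = \bfC\,\bar{\bfA}^{\,k-i}\,\bfB$ for $i \le k$ and $0$ otherwise, which depends only on the offset $k-i$. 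Hence $\bfy\func{t_k} = \sum_i \bfPhi\func{t_k,t_i} u_i$ is a causal discrete convolution with impulse response $\phi_n = \bfC\bar{\bfA}^n\bfB$, $n\ge 0$.

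Next I would take the $z$-transform of this impulse response: $H(z) = \sum_{n\ge0} \bfC\bar{\bfA}^n\bfB\, z^{-n} = \bfC\func{I - z^{-1}\bar{\bfA}}^{-1}\bfB = \bfC\, z\,\func{zI - \bar{\bfA}}^{-1}\bfB$, valid for $|z| > \rho(\bar{\bfA})$. By Cramer's rule every entry of $\func{zI-\bar{\bfA}}^{-1}$ is a ratio of polynomials in $z$ sharing the denominator $\det\func{zI-\bar{\bfA}}$ of degree $m$, so $H(z)$ is a strictly proper rational function of $z$ of McMillan degree at most $m$. Thus the convolution realized by $\bfPhi$ is an IIR filter with a rational transfer function, establishing the claimed inclusion.

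To show moreover that \emph{every} strictly proper rational kernel arises this way, I would invoke state-space realization theory: any strictly proper rational $H(z) = p(z)/q(z)$ with $\deg q = m$ and $q(0)\neq 0$ admits a realization $(\bar{\bfA}, \bar{\bfB}, \bar{\bfC})$, e.g. in controllable companion form. Choosing $\bfB = \bar{\bfB}$, $\bfC = \bar{\bfC}$, and $\bfA = \tfrac{1}{\Delta}\log\bar{\bfA}$ via the matrix logarithm (which exists since any invertible $\bar{\bfA}$ with $q(0)\neq 0$ has one, and allowing $\bfA \in \C{m\times m}$ accommodates complex eigenvalues) reproduces $H$ exactly in \eqref{eq:appendix-kernel-discrete}. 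When the poles of $H$ are simple, the realization may be taken diagonal, matching the diagonal-$\bfA$ parameterization used by STREAM and Mamba.

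The main obstacle is really bookkeeping rather than a deep point: the interplay between the continuous parameter $\bfA$ and the discrete transition $\bar{\bfA} = e^{\bfA\Delta}$ forces a branch choice in the logarithm and restricts attention to invertible $\bar{\bfA}$ (automatic here, since matrix exponentials are invertible), and insisting on staying inside the diagonal-$\bfA$ family restricts to rational kernels with simple poles — or one passes to the MIMO construction (parallel SISO copies mixed by linear layers), whose closure covers the general case. None of this affects the conclusion that convolutions with rational kernels lie in the model class defined by \eqref{eq:appendix-kernel-discrete}.
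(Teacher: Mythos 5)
Your proposal is correct and follows essentially the same route as the paper: specialize \eqref{eq:appendix-kernel-discrete} to the LTI case so the product of exponentials collapses and the kernel depends only on the offset, then invoke state-space realization theory to identify the resulting convolutions with rational transfer functions. The only differences are that you additionally restrict to a uniform grid and work out the $z$-transform and converse realization explicitly, whereas the paper telescopes $\prod_{j}\exp\func{\bfA\Delta_j}=\exp\func{\bfA\func{t_k-t_i}}$ on the irregular grid and cites the realization result without proof; neither difference changes the substance of the argument.
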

\begin{proof}
    Consider the linear time-invariant (LTI) case where $\bfA, \bfB, \bfC$ are constant for all $t$.
    Then 
    \begin{align}
        \prod_{j=i+1}^k\exp\func{\bfA_j\Delta_j} = \exp\func{\bfA\func{t_k - t_i}}
    \end{align}
    The proposition follows from the fundamental result from linear systems theory that LTI state-space models can represent every rational transfer function.
\end{proof}
\subsection{Recursive Computation of the Interaction Kernel}
Let's expand equation \eqref{eq:appendix-ssm-discrete} in a recursive form
\begin{align*}
    \bfh_k = \bfh\func{x_k} 
    &= \sum_{i=0}^{k} \left(\prod_{j=i+1}^ke^{\bfA_j\Delta_j}\right) \bfB_i u_i\\
    &= \bfB\bfu_k + \sum_{i=0}^{k-1} \left(\prod_{j=i+1}^ke^{\bfA_j\Delta_j}\right) \bfB_i u_i\\
    &=\bfB\bfu_k + \sum_{i=0}^{k-1} e^{\bfA_k\Delta_k} \left(\prod_{j=i+1}^{k-1}e^{\bfA_j\Delta_j}\right) \bfB_i u_i\\
    &=\bfB\bfu_k + e^{\bfA_k\Delta_k} \sum_{i=0}^{k-1}\left(\prod_{j=i+1}^{k-1}e^{\bfA_j\Delta_j}\right) \bfB_i u_i\\
    &=\bfB\bfu_k + e^{\bfA_k\Delta_k} \bfh_{k-1} \,.
\end{align*}
This concludes the derivation of our recurrent operator
\begin{align}
    \bfh_k
    =\bfB\bfu_k + e^{\bfA_k\Delta_k} \bfh_{k-1} \,. \label{eq:appendix-recurrent-ssm}
\end{align}
In line with most recent SSM works, we choose $\bfA_i \equiv \bfA$ as a diagonal matrix for all $i$.
\begin{remark}
    Equation \eqref{eq:appendix-recurrent-ssm} allows asynchronous inference on streams of incoming coordinates in $\Order{1}$ time per coordinate, or $\Order{N}$ time for the full stream of coordinates.
\end{remark}
\section{Scan}
\label{appendix:scan}
Linear time-varying systems such as the one given by equation \eqref{eq:appendix-recurrent-ssm} resemble an associative operation, with well-known time complexity of $\Order{\log N}$ \cite{blelloch1990prefix}.
The goal of this section is not to provide a proof, but to give the reader a clear idea of how irrgularly spaced sequences can be parallelized in the same way that regular SSMs can be parallelized with the Scan primitive~\cite{Smith2023, gu2024mambalineartimesequencemodeling}.
The presentation follows \cite{blelloch1990prefix}.

Consider the pair
\begin{align}
    \bfc_i = \left[\bfa_i, \bfb_i\right]
\end{align}
with the binary operator $\bullet$ defined through
\begin{align}
    \bfc_i \bullet\bfc_j 
    = \left[ \bfa_i \cdot \bfa_j, \bfa_j \cdot \bfb_i + \bfb_j\right] \,.
\end{align}
As~\cite{blelloch1990prefix} shows, the operator $\bullet$ is associative, i.e.
\begin{align}
    \left(\bfc_i \bullet \bfc_j \right) \bullet \bfc_k = \bfc_i \bullet \left( \bfc_j \bullet \bfc_k \right) \,.
\end{align}
Associative operators can be parallelized to run in $\Order{\log N}$ time on sequence of length $N$ given sufficiently many processors~\cite{blelloch1990prefix}.
\begin{figure*}[!htb]
    \centering
    \includegraphics[width=\textwidth]{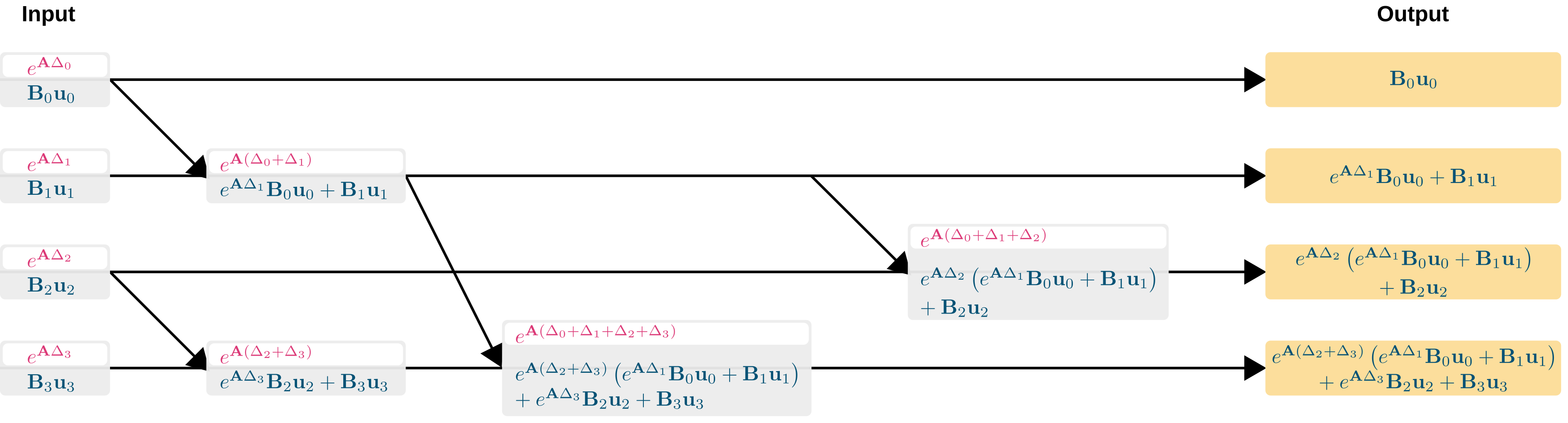}
    \caption{
        Example of a Scan operating on four inputs.
        The left most column represents the input pairs $(e^{\bfA\Delta_i}, \bfB_i\bfu_i)$, and the right most column represents the final results.
    }
    \label{fig:scan}
\end{figure*}
We use this primitive to parallelize our model.
The pairs $\bfc$ are initialized as 
\begin{align}
    \bfc_0 = \left[ e^{\bfA\Delta_0}, \bfB_0\bfu_0\right] \quad \dots \quad \bfc_N = \left[ e^{\bfA\Delta_N}, \bfB_N\bfu_N\right]
\end{align}
\Figref{scan} shows an example of how the Scan primitive computes the entire recurrence in equation \eqref{eq:appendix-recurrent-ssm} in roughly $2\log N$ steps.

\end{document}